\def\BibTeX{{\rm B\kern-.05em{\sc i\kern-.025em b}\kern-.08em
    T\kern-.1667em\lower.7ex\hbox{E}\kern-.125emX}}
\newtheorem{definition}{Definition}
\newtheorem{theorem}{Theorem}
\newtheorem{remark}{Remark}
\begin{document}

\title{Guaranteed Quantization Error Computation for Neural Network Model Compression\\
\thanks{This research was supported by the National Science Foundation, under NSF CAREER Award 2143351, and NSF CNS Award no. 2223035.}
}

\author{\IEEEauthorblockN{Wesley Cooke}
\IEEEauthorblockA{\textit{School of Computer and Cyber Sciences} \\
\textit{Augusta University}\\
Augusta, USA \\
wcooke@augusta.edu}
\and
\IEEEauthorblockN{Zihao Mo}
\IEEEauthorblockA{\textit{School of Computer and Cyber Sciences} \\
\textit{Augusta University}\\
Augusta, USA \\
zmo@augusta.edu}
\and
\IEEEauthorblockN{Weiming Xiang}
\IEEEauthorblockA{\textit{School of Computer and Cyber Sciences} \\
\textit{Augusta University}\\
Augusta, USA \\
wxiang@augusta.edu}
}

\maketitle

\begin{abstract}
Neural network model compression techniques can address the computation issue of deep neural networks on embedded devices in industrial systems. The guaranteed output error computation problem for neural network compression with quantization is addressed in this paper. A merged neural network is built from a feedforward neural network and its quantized version to produce the exact output difference between two neural networks. Then, optimization-based methods and reachability analysis methods are applied to the merged neural network to compute the guaranteed quantization error. Finally, a numerical example is proposed to validate the applicability and effectiveness of the proposed approach. 
\end{abstract}

\begin{IEEEkeywords}
model compression, neural networks, quantization 
\end{IEEEkeywords}

\section{Introduction}

 Neural networks have been demonstrated to be powerful and effective tools to solve complex problems such as image processing \cite{SCHMIDHUBER201585}, high-performance adaptive control \cite{HUNT19921083}, etc. Due to the increasing complexity of the problems in various applications, the scale and complexity of neural networks also grow exponentially to meet the desired accuracy and performance.  Recent progress of machine learning, such as training and using a new generation of large neural networks, heavily depends on the availability of exceptionally large computational resources, e.g., the Transformer model with neural architecture search proposed in \cite{vaswani2017attention}, if trained from scratch for each case,  requires 274,120 hours training on 8 NVIDIA P100 GPUs \cite{strubell2019energy}. Additionally, even for already trained neural networks, the verification process is also quite time- and resource-consuming, e.g., some simple properties in the ACAS Xu neural network of a 5-layer simple structure proposed in \cite{katz2017reluplex} need more than 100 hours to be verified. To avoid unaffordable computation when using neural networks, a variety of neural network acceleration and compression methods are proposed such as neural network pruning and quantization, which can significantly reduce the size and memory footprint of neural networks as well as expedite the speed of model inference.
 
Quantization as a reduction method is mainly concerned with the amount of memory utilized for the learnable parameters of a neural network. The data type for the weights and biases of a typical neural network is usually expressed as 32-bit floating-point values that will carry out millions of floating-point operations during inference time. Quantization aims to shrink the memory footprint of deep neural networks by reducing the number of bits used to store the values for the learnable parameters and activations. This is not only ideal for application scenarios where memory resources may be restricted such as embedded systems or microcontroller environments, but with the selected weight representation you could potentially facilitate faster inference using cheaper arithmetic operations \cite{RENNFES}. With the reduction in parameter bit precision, however, it is typical that a quantized neural network will perform worse in terms of accuracy than its non-quantized counterpart using gradient-based learning methods. However, these drops in accuracy are usually considered minimal and worth the given benefit in memory reduction and inference speed-up.
There exists much literature describing various techniques of quantization and successful results thereof, including works utilizing stochastic rounding to select weight values beneficial to gradient training \cite{HOHFELD1992291} and applications on modern deep architectures \cite{gupta2015deep}, as well as quantization methods that reduce the number of multiplication operations required during training time \cite{NNWFM}. Significant research has also been done on quantization-aware training methods \cite{Park_2018_ECCV}, where the loss in accuracy due to bit precision reduction is minimized. Some of these quantization-aware training methods utilize a straight-through gradient estimator (STE) \cite{EPGTSNCC} to more appropriately select weights during network training that minimizes accuracy and further reduces computational burden \cite{TDNNBWDP}.

As quantization methods are used for neural network reduction, there inevitably exist discrepancies between the performances of original and compressed neural networks. In this work, we propose a computationally tractable approach to compute the guaranteed output error caused by quantization. A merged neural network is constructed to generate the output differences between two neural networks, and then reachability analysis on the merged neural network can be performed to obtain the guaranteed error. 
The remainder of the paper is organized as follows: Preliminaries are given in Section II. The main results on quantization error computation are presented in Section III. A numerical example is given in Section IV. The conclusion is presented in Section V.

%\emph{Notations:} In this paper, the notation $\mathbb{R}$ represents real numbers. The notation $\mathbb{R}^n$ represents the vector space of all $n$-tuples of real numbers, and $\mathbb{R}^{n\times n}$ is the space of $n \times n$ matrices with real entries. $\mathbf{0}_{n \times m}$ denotes a matrix of $n$ rows and $m$ columns with all elements zero, $\mathbf{I}_n$ denotes the $n$-dimensional identity matrix.  

\section{Preliminaries}

In this work, we consider a class of fully-connected feedforward neural networks which can be described by the following recursive equations
\begin{align}\label{eq:nn}
\begin{cases}
    \mathbf{u}_{0} = \mathbf{u}
    \\
    \mathbf{u}_{\ell} = \phi_{\ell}(\mathbf{W}_{\ell} \mathbf{u}_{\ell-1}+\mathbf{b}_\ell),~\ell = 1,\ldots,L
    \\
    \mathbf{y} = \mathbf{u}_{L}
    \end{cases}
\end{align}
where $\mathbf{u}_{0} = \mathbf{u} \in \mathbb{R}^{n_u}$ is the input vector of the neural network, $\mathbf{y} = \mathbf{u}_{L}  \in \mathbb{R}^{n_y}$ is the output vector of the neural network, $\mathbf{W}_\ell \in \mathbb{R}^{n_{\ell}\times n_{\ell-1}}$ and $\mathbf{b}_{\ell} \in \mathbb{R}^{n_{\ell}}$ are weight matrices and bias vectors for the $\ell$-th layer, respectively. $\phi_\ell = [\psi_{\ell},\cdots,\psi_{\ell}]$ is the concatenation of activation functions of the $\ell$-th layer in which $\psi_{\ell}:\mathbb{R} \to \mathbb{R}$ is the activation function, e.g., such as logistic, tanh, ReLU, sigmoid functions. In addition, the input-output mapping of the above neural network $\Phi:\mathbb{R}^{n_u} \to \mathbb{R}^{n_y}$ is denoted in the form of
\begin{align} \label{eq:compact_nn}
    \mathbf{y} = \Phi(\mathbf{u})
\end{align}
where $\mathbf{u} \in \mathbb{R}^{n_u}$ and $\mathbf{y} \in \mathbb{R}^{n_y}$ are the input and output of the neural network, respectively. 

A common quantization procedure $\mathsf{Q}(\cdot)$ to map a floating point value $r$ to an integer can be formulated as what follows
\begin{align}\label{eq:quantize}
    \mathsf{Q}(r) =  \mathsf{int}(r/S)-Z
\end{align}
where $S$  is a floating point value as a scaling factor, and $Z$ is an integer value that represents $0$ in the quantization policy which could be $0$ or other values. The $\mathsf{int}:\mathbb{R} \to \mathbb{Z}$ is the function rounding the floating point value of an integer. 

To reduce the size and complexity of the neural network, the quantization procedure is implemented on the neural network parameters, i.e., weights and biases. The quantized version of the neural network  (\ref{eq:nn}) is in the form of
\begin{align}
    \begin{cases}
    \mathbf{u}_{0} = \mathbf{u}
    \\
    \mathbf{u}_{\ell} = \phi_{\ell}(\mathsf{Q}(\mathbf{W}_{\ell}) \mathbf{u}_{\ell-1}+\mathsf{Q}(\mathbf{b}_\ell)),~\ell = 1,\ldots,L
    \\
    \mathbf{y} = \mathbf{u}_{L}
    \end{cases}
\end{align}
where $\mathsf{Q}(\mathbf{W}_\ell) \in \mathbb{Z}^{n_{\ell}\times n_{\ell-1}}$ and $\mathsf{Q}(\mathbf{b}_{\ell}) \in \mathbb{Z}^{n_{\ell}}$ are the quantized weight matrices and bias vectors for the $\ell$-th layer under the quantization process $\mathsf{Q}(\cdot)$. Furthermore, the quantized version of neural network $\Phi:\mathbb{R}^{n_u} \to \mathbb{R}^{n_y}$ is expressed as $\Phi_{\mathsf{Q}}:\mathbb{Z}^{n_u} \to \mathbb{Z}^{n_y}$ in the form of 
\begin{align} \label{eq:q_compact_nn}
    \mathbf{y} = \Phi_{\mathsf{Q}}(\mathbf{u}) .
\end{align}

The quantization can significantly reduce the size and computational complexity of a neural network, e.g., mapping the 32-bit floating point representation to an 8-bit integer representation, leading to smaller
models that can fit in hardware with high computational efficiency. However, the price to pay is the loss of performance and precision post-quantization. To formally characterize the performance loss caused by quantization, a reasonable expectation is to compute the quantization error between the neural network and its quantized version. 

\begin{definition}\label{def:error}
Given a tuple $\mathbb{M} \triangleq  \langle \Phi, \mathsf{Q}, \mathcal{U} \rangle$ where $\Phi$ is a neural network defined by (\ref{eq:nn}), $\mathsf{Q}$ is the quantization process of (\ref{eq:quantize}) producing quantized neural network $\Phi_{\mathsf{Q}}$, and $\mathcal{U} \in \mathbb{R}^{n_u}$ is a compact input set, the guaranteed quantization error is defined by 
\begin{align}
    \rho(\mathbb{M}) = \sup_{\mathbf{u}\in \mathcal{U}}\left\|\Phi(\mathbf{u})  -\Phi_{\mathsf{Q}}(\mathbf{u}) \right\|
\end{align}
where $\Phi_{\mathsf{Q}}$ is the quantized neural network of $\Phi$. 
\end{definition}

\begin{remark}
The assumption that the input set $\mathcal{U}$ is a compact set is reasonable since neural networks are rarely applied to raw data sets. Instead, standardization and rescaling techniques such as normalization are used which ensure the inputs are always within a compact set such as $[0,1]$ or $[-1,1]$.  Given the compact input set $\mathcal{U}$ which contains all possible input to the neural network, the guaranteed quantization error $\rho(\mathbb{M}) $ characterizes the upper bound for the difference between the outputs of neural network $\Phi$ and its quantized version $\Phi_{\mathsf{Q}}$ generated from the same inputs in set $\mathcal{U}$, which quantifies the discrepancy caused by the quantization process $\mathsf{Q}$ in terms of outputs. 
\end{remark}

\section{Quantization Error Computation}

To address the quantization error computation problem, the key is to estimate a $\gamma >0 $ such that $\rho(\mathbb{M}) \le \gamma$. Due to the complexity of the neural network, it is challenging to estimate the $\gamma$ directly from the discrepancy of $\Phi(\mathbf{u})  -\Phi_{\mathsf{Q}}(\mathbf{u})$. Other than directly analyzing the discrepancy of two neural networks, we proposed to construct a new fully-connected neural network $\tilde{\Phi}$ merged from $\Phi$ and $\Phi_{\mathsf{Q}}$ which is expected to produce the discrepancy of the outputs of two neural networks, i.e., $\tilde{\Phi}(\mathbf{u})=\Phi(\mathbf{u})  -\Phi_{\mathsf{Q}}(\mathbf{u})$,  and then search for the upper bound of the outputs of the merged neural network.

Given $L$-layer neural network $\Phi$ and quantized $\Phi_{\mathsf{Q}}$, the merged neural network $\tilde{\Phi}:\mathbb{R}^{n_u} \to \mathbb{R}^{n_y}$ is constructed with $L+1$ layers as follows:
\begin{align}\label{eq:m_nn}
\begin{cases}
    \tilde{\mathbf{u}}_{0} = \mathbf{u}
    \\
    \tilde{\mathbf{u}}_{\ell} = \tilde{\phi}_{\ell}(\tilde{\mathbf{W}}_{\ell} \tilde{\mathbf{u}}_{\ell-1}+\tilde{\mathbf{b}}_\ell),~\ell = 1,\ldots,L+1
    \\
    \tilde{\mathbf{y}} = \tilde{\mathbf{u}}_{L+1}
    \end{cases}
\end{align}
where
\begin{align} \label{eq:thm1_1}
\tilde{\mathbf{W}}_{\ell} & = \begin{cases}
    \begin{bmatrix}
    \mathbf{W}_{1} \\ \mathsf{Q}(\mathbf{W}_{1})
    \end{bmatrix}, &\ell = 1
    \\
    \begin{bmatrix}
    \mathbf{W}_{\ell} & \mathbf{0}_{{n_{\ell}} \times { n_{\ell-1}}}
    \\
    \mathbf{0}_{{ n_{\ell-1}} \times {n_{\ell}}} & \mathsf{Q}({\mathbf{W}}_{\ell})
    \end{bmatrix}, &1 < \ell \le L
    \\
    \begin{bmatrix}
    \mathbf{I}_{n_y} & -\mathbf{I}_{n_{y}} 
    \end{bmatrix}, & \ell =L+1
    \end{cases}
\\
     \label{eq:thm1_2}
    \tilde{\mathbf{b}}_{\ell} &= \begin{cases}
    \begin{bmatrix}
    \mathbf{b}_\ell \\ 
    \mathsf{Q}(\mathbf{b}_\ell)
    \end{bmatrix}, & 1 \le \ell \le  L
    \\
    \begin{bmatrix}
    \mathbf{0}_{{2n_y} \times 1}     \end{bmatrix}, & \ell = L+1
    \end{cases} 
    \\
    \label{eq:thm1_3}
    \tilde \phi_\ell(\cdot) &= \begin{cases}
      \phi_\ell(\cdot) 
,& 1 \le \ell \le  L
    \\
    \mathsf{L}(\cdot), & \ell = L+1
    \end{cases} 
    \end{align}
where $\mathsf{L}(\cdot)$ is linear transfer function, i.e., $x = \mathsf{L}(x)$.

\begin{theorem}\label{thm1}
Given a tuple $\mathbb{M} \triangleq  \langle \Phi, \mathsf{Q}, \mathcal{U} \rangle$ where $\Phi$ is a neural network defined by (\ref{eq:nn}), $\mathsf{Q}$ is the quantization process of (\ref{eq:quantize}), and $\mathcal{U} \in \mathbb{R}^{n_u}$ is a compact input set,  the guaranteed quantization error $\rho(\mathbb{M})$ can be computed by
\begin{align}
    \rho(\mathbb{M}) = \sup_{\mathbf{u}\in \mathcal{U}}\left\|\tilde\Phi(\mathbf{u})\right\|
\end{align}
where $\tilde\Phi$ is a fully-connected neural network defined in (\ref{eq:m_nn}). 
\end{theorem}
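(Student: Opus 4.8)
The plan is to reduce the theorem to a single structural fact: that the merged network $\tilde\Phi$ computes exactly the output difference, i.e. $\tilde\Phi(\mathbf{u}) = \Phi(\mathbf{u}) - \Phi_{\mathsf{Q}}(\mathbf{u})$ for every $\mathbf{u}$. Once this identity is in hand, the claim follows immediately by substituting it into the definition $\rho(\mathbb{M}) = \sup_{\mathbf{u}\in\mathcal{U}}\|\Phi(\mathbf{u}) - \Phi_{\mathsf{Q}}(\mathbf{u})\|$ from Definition \ref{def:error}. The conceptual content is that the block structure of the weights in (\ref{eq:thm1_1})--(\ref{eq:thm1_3}) lets $\tilde\Phi$ run the forward passes of $\Phi$ and $\Phi_{\mathsf{Q}}$ side by side in two non-interacting coordinate blocks, and then subtract them in the final layer.

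First I would prove by induction on $\ell$ that the merged hidden state has the stacked form $\tilde{\mathbf{u}}_\ell = [\mathbf{u}_\ell^{\top},(\mathbf{u}_\ell^{\mathsf{Q}})^{\top}]^{\top}$ for $1 \le \ell \le L$, where $\mathbf{u}_\ell$ is the layer-$\ell$ state of $\Phi$ in (\ref{eq:nn}) and $\mathbf{u}_\ell^{\mathsf{Q}}$ is the corresponding state of the quantized network $\Phi_{\mathsf{Q}}$. For the base case $\ell=1$, the shared input $\tilde{\mathbf{u}}_0 = \mathbf{u}$ together with the stacked weight and bias of (\ref{eq:thm1_1})--(\ref{eq:thm1_2}) gives the affine value $[(\mathbf{W}_1\mathbf{u}+\mathbf{b}_1)^{\top},(\mathsf{Q}(\mathbf{W}_1)\mathbf{u}+\mathsf{Q}(\mathbf{b}_1))^{\top}]^{\top}$, and applying $\tilde\phi_1 = \phi_1$ yields the two first-layer states. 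For the inductive step $1 < \ell \le L$, the block-diagonal $\tilde{\mathbf{W}}_\ell$ sends $[\mathbf{u}_{\ell-1}^{\top},(\mathbf{u}_{\ell-1}^{\mathsf{Q}})^{\top}]^{\top}$ to $[(\mathbf{W}_\ell\mathbf{u}_{\ell-1})^{\top},(\mathsf{Q}(\mathbf{W}_\ell)\mathbf{u}_{\ell-1}^{\mathsf{Q}})^{\top}]^{\top}$, the zero off-diagonal blocks preventing any cross-talk between the two passes; adding $\tilde{\mathbf{b}}_\ell$ and applying $\phi_\ell$ then reproduces the recursion of (\ref{eq:nn}) independently in each block.

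The step I would flag as the crux is the one that legitimizes this decoupling: each $\phi_\ell$ is, by construction in (\ref{eq:nn}), a component-wise concatenation of scalar activations $\psi_\ell$, so $\phi_\ell$ evaluated on a stacked vector equals the stack of $\phi_\ell$ evaluated on each part. Absent this separability, the off-diagonal zeros alone would not keep the two networks independent, so this observation is where the argument really earns its conclusion. With the induction complete, at layer $L$ we obtain $\tilde{\mathbf{u}}_L = [\Phi(\mathbf{u})^{\top},\Phi_{\mathsf{Q}}(\mathbf{u})^{\top}]^{\top}$. Finally, layer $L+1$ applies the linear activation $\mathsf{L}$ to $\tilde{\mathbf{W}}_{L+1}\tilde{\mathbf{u}}_L + \tilde{\mathbf{b}}_{L+1} = [\mathbf{I}_{n_y}\ {-\mathbf{I}_{n_y}}]\,\tilde{\mathbf{u}}_L = \Phi(\mathbf{u}) - \Phi_{\mathsf{Q}}(\mathbf{u})$, which establishes $\tilde\Phi(\mathbf{u}) = \Phi(\mathbf{u}) - \Phi_{\mathsf{Q}}(\mathbf{u})$ and hence $\rho(\mathbb{M}) = \sup_{\mathbf{u}\in\mathcal{U}}\|\tilde\Phi(\mathbf{u})\|$ as desired. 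I expect no genuine obstacle beyond bookkeeping the block dimensions carefully and stating the component-wise separability of $\phi_\ell$ explicitly.
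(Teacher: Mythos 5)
Your proposal is correct and follows essentially the same route as the paper's own proof: an induction on layers showing $\tilde{\mathbf{u}}_\ell$ stacks the hidden states of $\Phi$ and $\Phi_{\mathsf{Q}}$ (the paper does this by computing $\ell=1,2$ explicitly and iterating), then applying the final $[\mathbf{I}_{n_y}\ {-\mathbf{I}_{n_y}}]$ layer to get $\tilde\Phi(\mathbf{u}) = \Phi(\mathbf{u}) - \Phi_{\mathsf{Q}}(\mathbf{u})$ and invoking Definition \ref{def:error}. Your explicit flagging of the component-wise separability of $\phi_\ell$ is a point the paper uses silently, so your write-up is if anything slightly more careful, but it is not a different argument.
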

\begin{proof} First, let us consider $\ell = 1$. Given an input $\tilde{\mathbf{u}}_0 =\mathbf{u}  \in \mathbb{R}^{n_u}$, one can obtain that
\begin{align} \label{eq:p_thm1_1}
  \tilde{\mathbf{u}}_1 = \tilde{\phi}_{1}(\tilde{\mathbf{W}}_{1} \tilde{\mathbf{u}}_0+\tilde{\mathbf{b}}_1) = \begin{bmatrix}
   \phi_{1}(\mathbf{W}_{1} \tilde{\mathbf{u}}_0+\mathbf{b}_1) \\ \phi_{1}(\mathsf{Q}(\mathbf{W}_{1}) \tilde{\mathbf{u}}_{0}+\mathsf{Q}(\mathbf{b}_1))
    \end{bmatrix}.
\end{align}

Then, we consider $1 < \ell \le L$. Starting from $\ell =2$, we have 
\begin{align*}
  \tilde{\mathbf{W}}_{2} \tilde{\mathbf{u}}_{1}
     & = \begin{bmatrix}
    \mathbf{W}_{2} & \mathbf{0}_{{n_{2}} \times { n_{1}}}
    \\
    \mathbf{0}_{{ n_{2}} \times {n_{1}}} & \mathsf{Q}({\mathbf{W}}_{2}) 
    \end{bmatrix} \begin{bmatrix}
   \phi_{1}(\mathbf{W}_{1} \tilde{\mathbf{u}}_0+\mathbf{b}_1) \\ \phi_{1}(\mathsf{Q}(\mathbf{W}_{1}) \tilde{\mathbf{u}}_{0}+\mathsf{Q}(\mathbf{b}_1))
    \end{bmatrix}
    \\
   &= \begin{bmatrix}
   \mathbf{W}_{2}\phi_{1}(\mathbf{W}_{1} \tilde{\mathbf{u}}_0+\mathbf{b}_1) \\ \mathsf{Q}({\mathbf{W}}_{2}) \phi_{1}(\mathsf{Q}(\mathbf{W}_{1}) \tilde{\mathbf{u}}_{0}+\mathsf{Q}(\mathbf{b}_1))
    \end{bmatrix}
 \end{align*} .
Furthermore, it leads to
\begin{align*}
    \tilde{\mathbf{u}}_2 &= \tilde{\phi}_{2}(\tilde{\mathbf{W}}_{2} \tilde{\mathbf{u}}_1+\tilde{\mathbf{b}}_2) 
    \\
    &=  \begin{bmatrix}
   \phi_2(\mathbf{W}_{2}\phi_{1}(\mathbf{W}_{1} \tilde{\mathbf{u}}_0+\mathbf{b}_1)+\mathbf{b}_2)\\ \phi_2(\mathsf{Q}({\mathbf{W}}_{2}) \phi_{1}(\mathsf{Q}(\mathbf{W}_{1}) \tilde{{\mathbf{u}}}_{0}+\mathsf{Q}(\mathbf{b}_1))+\mathbf{b}_2)
    \end{bmatrix} .
\end{align*}

Iterating the above process from $\ell=2$ to $\ell =L$, the following recursive equation can be derived
\begin{align*}
    \tilde{\mathbf{u}}_{\ell} = \tilde{\phi}_{\ell}(\tilde{\mathbf{W}}_{\ell} \tilde{\mathbf{u}}_{\ell-1}+\tilde{\mathbf{b}}_\ell) 
    = \begin{bmatrix}
   \phi_{\ell}(\mathbf{W}_{\ell} \tilde{\mathbf{u}}_{\ell-1}+\mathbf{b}_\ell)\\ \phi_{\ell}(\mathsf{Q}(\mathbf{W}_{\ell}) \tilde{\mathbf{u}}_{\ell-1}+\mathsf{Q}(\mathbf{b}_\ell))
    \end{bmatrix}
\end{align*}
where $\ell = 2,\ldots,L $. Together with (\ref{eq:p_thm1_1}) when $\ell =1$, it yields that 
\begin{align}
    \tilde{\mathbf{u}}_L = \begin{bmatrix}
    \Phi(\mathbf{u})
    \\
    \Phi_\mathsf{Q}(\mathbf{u})
    \end{bmatrix} .
\end{align}

Furthermore, when considering the last layer $\ell = L+1$, the following result can be obtained
\begin{align}
    \tilde{\mathbf{u}}_{L+1} = \mathsf{L}\left(\begin{bmatrix}
    \mathbf{I}_{n_y} & -\mathbf{I}_{n_{y}} 
    \end{bmatrix}\begin{bmatrix}
    \Phi(\mathbf{u})
    \\
    \Phi_\mathsf{Q}(\mathbf{u})
    \end{bmatrix}\right)=  \Phi(\mathbf{u})-  \Phi_\mathsf{Q}(\mathbf{u})
\end{align}
where means $\tilde \Phi(\mathbf{u}) = \Phi(\mathbf{u})-  \Phi_\mathsf{Q}(\mathbf{u})$.

Based on the definition of guaranteed quantization error $\rho(\mathbb{M})$, i.e., Definition \ref{def:error},  we can conclude that
\begin{align}
    \rho(\mathbb{M}) = \sup_{\mathbf{u}\in \mathcal{U}}\left\| \Phi(\mathbf{u})-  \Phi_\mathsf{Q}(\mathbf{u})\right\| =   \sup_{\mathbf{u}\in \mathcal{U}}\left\|\tilde\Phi(\mathbf{u})\right\|  .
\end{align}
The proof is complete.
\end{proof}
\begin{remark} \label{rem2}
Theorem \ref{thm1} implies that we can analyze the merged neural network $\tilde \Phi$ to compute the quantization error between neural network $\Phi$ and its quantized version $\Phi_{\mathsf{Q}}$. This result facilitates the computation process by employing those analyzing tools, such as optimization and reachability analysis tools, for merged neural network $\tilde \Phi$. 
\begin{itemize}
    \item Using the interval arithmetic for neural network, we can employ Moore-Skelboe Algorithm \cite{moore1988inclusion} to search upper bound of $||\tilde{\Phi}(\mathbf{u} )||$ subject to $\mathbf{u} \in \mathcal{U}$ where $\mathcal{U}$ is a compact set. The key to implement  Moore-Skelboe Algorithm is to construct the interval extension of $[\tilde \Phi]:\mathbb{IR}^{n_u} \to \mathbb{IR}^{n_y}$. First, from Theorem 1 in \cite{xiang2020reachable} under the assumption that activation functions are monotonically increasing, the interval extension of  merged neural network  $[\tilde \Phi]$ can be constructed as
    \begin{align}
        [\tilde \Phi] = [\tilde \Phi^{-}, \tilde \Phi^{+}]
    \end{align}
    where $\tilde \Phi^{-}$ and $\tilde \Phi^{+}$ are left (limit inferior) and right (limit superior) bounds of interval $[\tilde \Phi]$ that are defined as follows
    \begin{align*}
    \tilde \Phi^{-}:
        \begin{cases}
            \tilde{\mathbf{u}}_0^{-} = \mathbf{u}^{-} 
            \\
            \tilde{\mathbf{u}}_\ell^{-}=\tilde\phi_{\ell}\left(\begin{bmatrix}
            \tilde{\mathbf{W}}_\ell^{-} & \tilde{\mathbf{W}}_\ell^{+} 
            \end{bmatrix}
            \begin{bmatrix}
            \tilde{\mathbf{u}}_{\ell-1}^{+}
            \\
             \tilde{\mathbf{u}}_{\ell-1}^{-}
            \end{bmatrix}+\tilde{\mathbf{b}}_{\ell}\right)
            \\
            \tilde{\mathbf{y}}^{-} = \tilde{\mathbf{u}}_{L+1}^{-}
        \end{cases}
        \\
            \tilde \Phi^{+}:
        \begin{cases}
            \tilde{\mathbf{u}}_0^{+} = \mathbf{u}^{+} 
            \\
            \mathbf{u}_\ell^{+}=\tilde\phi_{\ell}\left(\begin{bmatrix}
            \tilde{\mathbf{W}}_\ell^{-} & \tilde{\mathbf{W}}_\ell^{+} 
            \end{bmatrix}
            \begin{bmatrix}
            \tilde{\mathbf{u}}_{\ell-1}^{-}
            \\
            \tilde{ \mathbf{u}}_{\ell-1}^{+}
            \end{bmatrix}+\tilde{\mathbf{b}}_{\ell}\right) 
            \\
            \tilde{\mathbf{y}}^{+} = \tilde{\mathbf{u}}_{L+1}^{+}
        \end{cases}
    \end{align*}
in which $\mathcal{U} \subseteq [\mathbf{u}]=[ \mathbf{u}^{-},\mathbf{u}^{+}]$, and 
\begin{align} \label{eq:W_1}
   \mathbf{W}_\ell^{-} &= [\underline{w}_\ell^{i,j}],~\underline{w}_\ell^{i,j} = \begin{cases}
        w_{\ell}^{i,j} & w_{\ell}^{i,j} < 0
        \\
        0 & w_{\ell}^{i,j} \ge 0
    \end{cases}
    \\
    \label{eq:W_2}
    \mathbf{W}_\ell^{+}& = [\overline{w}_\ell^{i,j}],~\overline{w}_\ell^{i,j} = \begin{cases}
        w_{\ell}^{i,j} & w_{\ell}^{i,j} \ge 0
        \\
        0 & w_{\ell}^{i,j} < 0
    \end{cases}
\end{align}
with $w_\ell^{i,j}$, $\underline{w}_\ell^{i,j}$, and $\overline{w}_\ell^{i,j}$ being the elements in $i$-th row and $j$-th column of matrix $\mathbf{W}_\ell$, $\mathbf{W}_\ell^{-}$, and $\mathbf{W}_\ell^{+}$. With the above tractable calculation of $\tilde \Phi^{-}$ and $\tilde \Phi^{+}$, we can perform  Moore-Skelboe Algorithm to compute guaranteed quantization error $\rho(\mathbb{M})$. 
\item Under the framework of reachability analysis of neural networks, the guaranteed quantization error computation problem can be turned into a reachable set computation problem for merged neural network $\tilde\Phi$. Given the input
	set $\mathcal{U}$, the following set
	\begin{align}
	\mathcal{Y} = \left\{\tilde{\mathbf{y}}\in \mathbb{R}^{n_y} \mid \tilde{\mathbf{y}} = \tilde\Phi (\mathbf{u}),~ \mathbf{u} \in \mathcal{U}\right\}
	\end{align}
	is called the output set of neural network (\ref{eq:nn}).  The guaranteed quantization error $\rho(\mathbb{M})$ can be obtained by 
	\begin{align} \label{eq:error}
	    \rho(\mathbb{M}) =  \max\{\mathbf{y} \mid \tilde{\mathbf{y}} \in \mathcal{Y}\} . 
	\end{align}
	The key step is the computation for the reachable set $\mathcal{Y}$. This can be efficiently done through neural network reachability analysis. There exist a number of verification tools for neural networks available for the reachable set computation. The neural network reachability analysis tool can produce the reachable set $\mathcal{Y}$ in the form of a union of polyhedral sets such as NNV \cite{tran2020nnv}, veritex \cite{yang_yamaguchi_hoxha_prokhorov_johnson}, etc. The IGNNV tool computes the reachable set $\mathcal{Y}$ as a union of interval sets \cite{xiang2018output,xiang2020reachable}. With the reachable set ${\mathcal{Y}}$, the guaranteed quantization error $\rho(\mathbb{M})$ can be easily obtained by searching for the maximal value of $\left\|\tilde{\mathbf{y}}\right\| $ in ${\mathcal{Y}}$, e.g., testing throughout a finite number of vertices in the interval or polyhedral sets.
\end{itemize}
\end{remark}

\begin{table}[t!]
\begin{center}
\caption {Neural Network Model Memory Sizes} \label{tab:tab1} 
\begin{tabular}{|c|c|}
\hline
\textbf{Neural Networks}  & \textbf{Size (KB)}  \\
\hline
Original Neural Network ($1 \times 50\times 50\times 50\times 1$)  & 35.0     \\
\hline
Quantized Neural Network ($1 \times 50\times 50\times 50\times 1$)   & 19.6 \\
\hline
%Large merged with Large Quantized & 91.8 & 92.0  \\
%\hline
%Large merged with Pruned and Quantized & 242 & 244 \\
%\hline
\end{tabular}
\end{center}
\end{table} 
\begin{figure}[t!]
\begin{center}
\includegraphics[width=8cm]{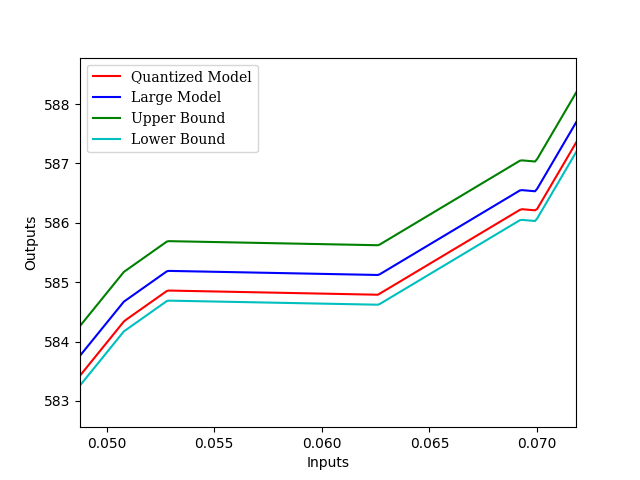}
  \caption{The lower- and upper bounds of outputs of quantized neural networks based on a guaranteed quantization error.}
  \label{fig1}
\end{center}
\end{figure}

\section{Numerical Example}
To verify the effectiveness of the quantization error computation, a numerical example is used. First, a large neural network, $\Phi$, is generated such that it has a 1-D input layer, three hidden layers with 50 neurons in each layer, and a 1-D output layer. Each layer has an activation function of ReLU except for the output layer with a linear function. The weights and biases were randomly initialized, but with the condition that they were normally distributed with a mean of zero and a standard deviation of one. 

After generating $\Phi$, a quantization method was applied to reduce the size of the weights and biases, and to introduce a slight reduction in accuracy. This quantized network is called $\Phi_{\mathsf{Q}}$. While there exist several quantization methods and tools to quantize networks, a basic technique that truncates the weights and biases to 4 decimal places is used in this numerical example.

Next, a merged network $\tilde{\Phi}$ was constructed from $\Phi$ and $\Phi_{\mathsf{Q}}$ according to (\ref{eq:m_nn}). The veritex neural network reachability tool \cite{yang_yamaguchi_hoxha_prokhorov_johnson} computed the reachable output set of $\tilde{\Phi}$ given the input interval normalized as $[0, 1]$. Finally, the quantization error was obtained using (\ref{eq:error}) which is $\rho(\mathbb{M})=0.5008$.
Using this error, the lower and upper bound can be constructed using the following formula: $\Phi(u) \pm \rho(\mathbb{M})$ as shown in Fig. \ref{fig1}. 
Please note that this quantization error is very small compared with the range of outputs. Thus, %very small and makes viewing the entire comparison of inputs, outputs, upper bounds, and lower bounds on the entire range from 0 to 1 very difficult. 
to provide more detail, the figure has been zoomed into an appropriate scale. Moreover, the memory sizes of models are shown in Table \ref{tab:tab1}.

\section{Conclusions}
This paper addressed the guaranteed output error computation problem for neural network compression with quantization. Based on the original neural network and its compressed version resulted from quantization, a merged neural network computation framework is developed, which can utilize optimization-based methods and reachability analysis methods to compute the guaranteed quantization error. At last, numerical examples are proposed to validate the applicability and effectiveness of the proposed approach. Future work will be expanded to include more complex and various neural network architectures such as convolutional neural networks. 

\bibliographystyle{IEEEtran}
% argument is your BibTeX string definitions and bibliography database(s)
%\bibliography{IEEEabrv,../bib/paper}
\bibliography{reference.bib}

\end{document}